\newtheoremstyle{notestyle}
  {3pt}
  {3pt}
  {\itshape}
  {}
  {\bfseries}
  {.}
  {.5em}
  {\thmname{#1}\thmnumber{ #2}\thmnote{ (#3)}}
\theoremstyle{notestyle}
\newtheorem*{note}{Note}
\theoremstyle{plain}
\newtheorem{theorem}{Theorem}[section]
\theoremstyle{definition}
\theoremstyle{remark}
\icmltitlerunning{Non-Vacuous Generalization Bounds for Large Language Models}
\begin{document}

\twocolumn[
\icmltitle{Non-Vacuous Generalization Bounds for Large Language Models}

\icmlsetsymbol{equal}{*}

\begin{icmlauthorlist}
\icmlauthor{Sanae Lotfi}{equal,nyu}
\icmlauthor{Marc Finzi}{equal,cmu}
\icmlauthor{Yilun Kuang}{equal,nyu}
\\
\vspace{0.5em}
\icmlauthor{Tim G. J. Rudner}{nyu}
\icmlauthor{Micah Goldblum}{nyu}
\icmlauthor{Andrew Gordon Wilson}{nyu}
\end{icmlauthorlist}

\icmlaffiliation{nyu}{New York University}
\icmlaffiliation{cmu}{Carnegie Mellon University}

\icmlcorrespondingauthor{Sanae Lotfi}{sl8160@nyu.edu}
\icmlcorrespondingauthor{Marc Finzi}{maf820@nyu.edu} 
\icmlcorrespondingauthor{Andrew Gordon Wilson}{\\andrewgw@cims.nyu.edu}

\icmlkeywords{Machine Learning, ICML}

\vskip 0.3in
]

\printAffiliationsAndNotice{\icmlEqualContribution}

\begin{abstract}
Modern language models can contain billions of parameters, raising the question of whether they can generalize beyond the training data or simply parrot their training corpora. We provide the first non-vacuous generalization bounds for pretrained large language models (LLMs), indicating that language models are capable of discovering regularities that generalize to unseen data. In particular, we derive a compression bound that is valid for the unbounded log-likelihood loss using prediction smoothing, and we extend the bound to handle subsampling, accelerating bound computation by orders of magnitude on massive datasets. To achieve the extreme level of compression required for non-vacuous bounds, we devise SubLoRA, a simple low-dimensional nonlinear parameterization that leads to non-vacuous generalization bounds for models with nearly a billion parameters. Finally, we use our bounds to understand LLM generalization and find that larger models have better generalization bounds and are more compressible than smaller models.
\end{abstract}

\section{Introduction}

Do large language models (LLMs) merely memorize the training data, and if so, are they able to meaningfully generalize beyond their training set?
This question is central to understanding LLMs as they continue to grow in capacity and are capable of memorizing and parroting training examples verbatim~\citep{brown2020language,palm, carlini2020extracting,carlini2023quantifyingiclr}.

In this work, we address the question of generalization in LLMs by computing the first non-vacuous generalization bounds for language model pretraining on next token prediction, thereby providing a mathematical guarantee that LLMs are able to generalize beyond their training data.

Although significant progress has been made in constructing non-vacuous generalization bounds for image classification models using the PAC-Bayes framework \citep{catoni2007pac} in conjunction with extreme levels of model compression \citep{zhou2019non, lotfi2022pac}, non-vacuous generalization bounds for large language models remain elusive.

Compared to image classification models, constructing non-trivial bounds for language models presents additional challenges:
(i) LLMs are trained on autoregressive token prediction, and thus token predictions are not independent; (ii) the relevant negative log-likelihood (NLL) metric (bits per dimension) is a continuous and unbounded random variable for which previously used non-vacuous PAC-Bayes bounds are invalid; and (iii) LLMs have orders of magnitude more parameters than image classification models.
To address these challenges, we derive new generalization bounds that can be applied to the unbounded bits per dimension objective.
We also introduce an extension of these bounds which can be computed using only a subset of the training data, making bound computation $900$ times faster on the OpenWebText dataset, which has more than $9$ billion tokens.  

Achieving the extreme level of compression required to obtain non-vacuous generalization bounds for LLMs is another challenge.
To this end, we devise SubLoRA (Subspace-Enhanced Low-Rank Adaptation):
simple nonlinear parameterization for LLMs that makes it possible to smoothly vary the level of compression while maintaining expressivity. SubLoRA combines low-rank adaptation (LoRA) \citep{hu2021lora}, originally proposed for efficient \emph{fine-tuning}, with subspace training \citep{li2018measuring,lotfi2022pac} to \emph{pretrain} highly compressed LLMs from scratch.

\begin{figure*}[t!]
    \hspace{10pt}
    \begin{subfigure}{0.6\textwidth} 
        \centering
        \includegraphics[width=\linewidth]{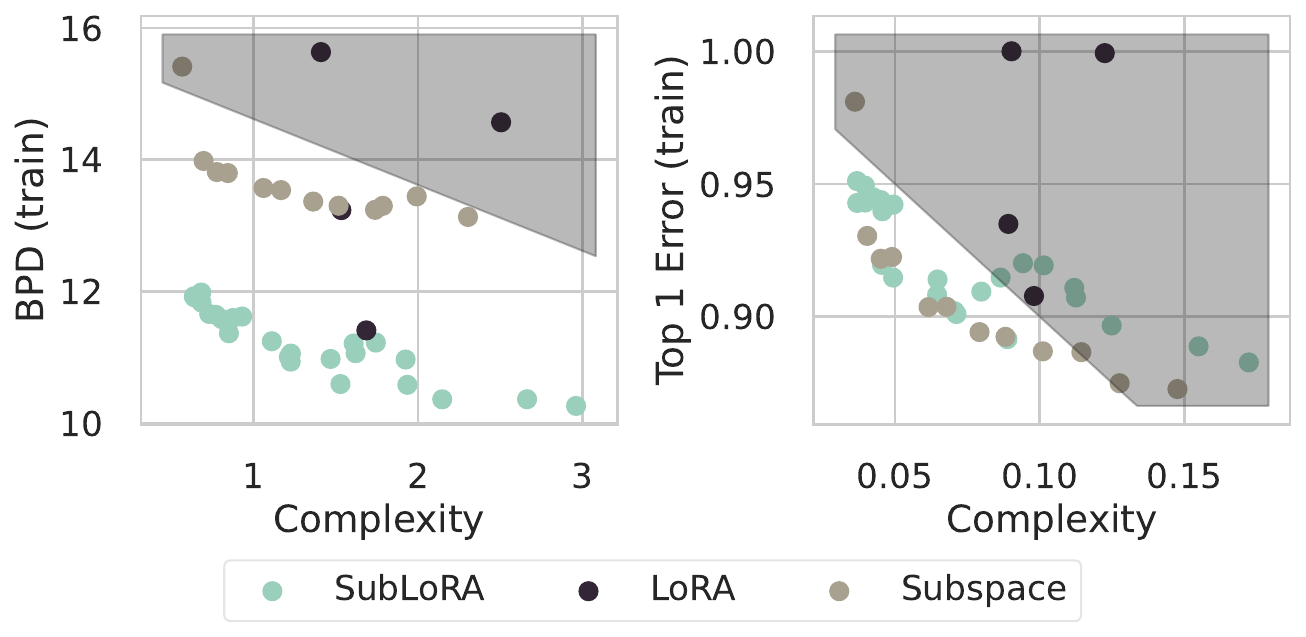}
        \label{fig:subfig1}
    \end{subfigure}
    \hspace{20pt}
    \begin{subfigure}{0.27\textwidth} 
        \centering
        \includegraphics[width=\linewidth]{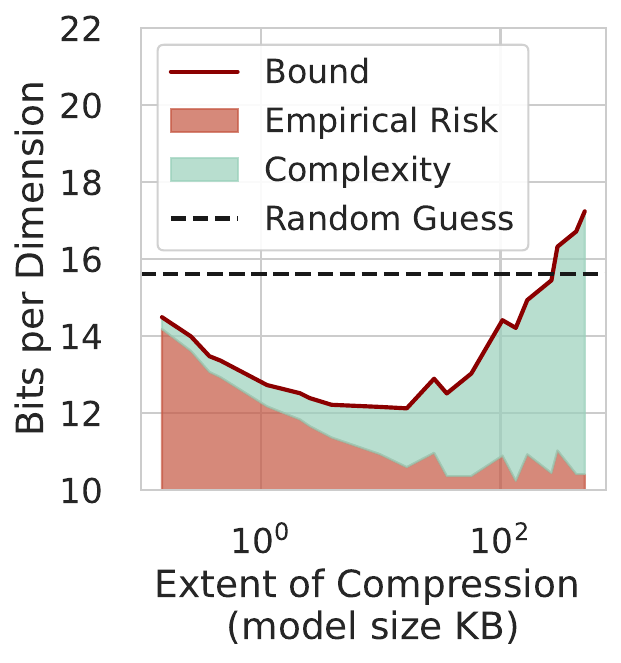} 
        \label{fig:subfig2}
    \end{subfigure}
    \caption{\textbf{Finding solutions that simultaneously achieve low training error and low complexity with SubLoRA.} 
    \textbf{(Left):} The Pareto frontier of model complexity (the 2nd term in \autoref{eq:bound}) and the empirical risk (bits per dimension (BPD) and Top-1 Error) of language models using LoRA and subspace compression for next token prediction pretraining. The generalization bound is formed from the sum of the two axes (lower is better), with the shaded region showing where bounds are vacuous. 
    Combining both LoRA and subspace compression in the form of SubLoRA yields the best bounds, while using LoRA alone yields vacuous bounds for top-1 error. 
    \textbf{(Right):} SubLoRA enables a smooth tradeoff over the extent of model compression for a fixed model, finding the degree of compression that is optimal for the situation in constructing the generalization bounds. We plot the contributions of the empirical risk and the complexity term to the bound as a function of this degree of compression.}
    \label{fig:fig1}
\vspace{-0.5em}
\end{figure*}

Combining the above-described theoretical and practical contributions, we achieve \textit{the first non-vacuous bounds for large language models}.
To highlight the efficiency of our new compression technique, we compare SubLoRA to LoRA and subspace training in \Cref{fig:fig1} (left).
We compute two metrics that we define as follows: Top-1 Error, which is the $0$-$1$ error in predicting the next token averaged over a given document; and the bits per dimension metric, which corresponds to the average negative log-likelihood per document.
The shaded region highlights where bounds become vacuous, with SubLoRA achieving non-vacuous bounds for both bits per dimension and Top-1 Error.
The term \textit{vacuous} refers to the random guess performance which is $\log_2 V$ for BPD and $1-1/V$ for Top-1 Error, where $V$ is the vocabulary size.
In contrast, we see that only using LoRA achieves vacuous bounds for Top-1 Error and only using subspace achieves a high value of empirical BPD. Despite the simplicity of SubLoRA, it has an improved ability to trade-off model complexity with training error.
In \Cref{fig:fig1} (right), we highlight the trade-off between model complexity and empirical risk in the generalization bounds as we vary the level of compression.

We summarize our contributions as follows:\vspace*{-7pt}
\begin{itemize}
\item \textbf{Novel bounds for the unbounded negative log-likelihood objective:} we introduce novel bounds specifically tailored to account for the unbounded continuous bits-per-dimension loss, commonly used to evaluate LLMs for next-token prediction.
\item \textbf{Subsampling bounds for practical bound evaluation:} To make the evaluation of the bounds practical on LLMs with massive datasets, we derive subsampling-based bounds that allow for efficient evaluation.
In practice, the evaluation of the bound takes 45 minutes on a single GPU instead of 3 days on 8 GPUs in parallel for the OpenWebText dataset.
\item \textbf{A simple yet powerful nonlinear subspace compression for LLMs:} as we show in \Cref{fig:fig1}, using LoRA alone to compress the discrepancy between the random initialization and a learned model leads to vacuous bounds for the top-1 error. 
At the same time, linear subspace training alone does not unlock the full compression potential of LLMs compared to a nonlinear compression scheme. 
We show that a combination of these two approaches, while simple, yields a strong nonlinear compression of the model, which leads to the best generalization bounds for LLMs.
\item \textbf{Non-vacuous generalization bounds for models with nearly a billion parameters:} our work not only introduces the first non-vacuous generalization bounds for LLMs, but it also extends these bounds to models with over $800$ million parameters, demonstrating the scalability of our compression technique. 
\item \textbf{Improved understanding of generalization in LLMs:} as we increase the size of models, we find that they are able to find more compressed representations of the data and achieve better bounds, therefore disproving the claim that larger LLMs are simply better at regurgitating their training data. 
\end{itemize}

These contributions collectively offer mathematical proof that large language models are, in fact, powerful knowledge compressors and are capable of generalization beyond their training samples, especially as their scale increases. 
To the best of our knowledge, our work is the first to show that generalization bounds improve with more parameters on models of practical sizes, in line with the empirical benefits of large models.
We make our code \href{https://github.com/Sanaelotfi/SubLoRA-bounds-for-LLMs}{\underline{available here}}.

\section{Related Work}

\textbf{Generalization bounds.} Neural networks have seen widespread adoption because of their strong performance on new unseen test samples, known as \emph{generalization}.  Early generalization theory literature bounded the difference in training and test error, called the \emph{generalization gap}, using complexity measures like VC-dimension \citep{vapnik1991principles} and Rademacher complexity \citep{bartlett2002rademacher}.
These generalization bounds were vacuous for neural networks, which are often flexible enough to fit randomly labeled training data \citep{zhang2021understanding}.  
The flexibility of neural networks and its negative impact on these classical bounds calls into question why they generalize. Neural networks are so flexible that they have parameter vectors where they fit their training data and simultaneously assign incorrect labels to testing data, and they also have parameter vectors where they fit their training data and instead assign correct labels to the testing data. Why do such flexible models actually make correct test predictions in practice?

PAC-Bayes generalization theory bridges this gap by leveraging the fact that while neural networks are highly flexible and can fit random labels, they encode a preference for the correct ones \citep{catoni2007pac, dziugaite2017computing}.  Unlike earlier generalization bounds which measured complexity merely as a function of the hypothesis class, PAC-Bayes generalization bounds reward models which have a strong prior that places its mass on parameter vectors that align with observed data.  This formulation allows one to draw a parallel between generalization and compressibility \citep{zhou2019non, lotfi2022pac}.  By placing disproportionate prior mass on compressible parameter vectors, achieving a tight bound simply requires finding a family of models (posterior) that well fit the training data.  Such compression bounds achieve the tightest guarantees to date on modern convolutional architectures and large-scale datasets, showcasing the strong inductive bias of neural networks and indicating that they can significantly compress their training sets \citep{lotfi2022pac}.
While PAC-Bayes has proven a very fruitful framework for devising such bounds, the insight on using a prior to bound the complexity of a given model does not require a posterior and can actually be incorporated into simpler finite hypothesis bounds that apply to deterministically trained models.

Recent generalization theory literature has expanded analysis to several relevant models---autoregressive time-series models and simple n-gram language models \citep{mcdonald2011generalization,bharadwaj2014pac,vankadara2022causal}.
In contrast, we construct bounds for autoregressive transformer-based language models.

\textbf{Existing bounds for unbounded objectives.} A number of works have explored techniques for generating generalization bounds on unbounded objective functions more generally, but these approaches are not practical for application to LLMs. A well established strategy relevant for e.g. linear regression with Gaussian errors is to bound the tails of the objective as subgaussian random variables, and then generalization bounds can be constructed for subgaussians more generally \citep{alquier2016properties,germain2016pac}. Other kinds of known tail behavior have also been exploited \citep{holland2019pac,kuzborskij2019efron}. For the NLL of a language model, there is no clear analogous tail behavior, so we must take a different approach.

\citet{haddouche2021pac} devise an approach for general unbounded objectives by constructing a hypothesis dependent bound on the objective, even if the objective is unbounded more generally. If the risk can be bounded $\sup_{x} R(h,x) \le Q(h)$ for a function $Q(h)$, then PAC-Bayes bounds can be constructed using $Q(h)$ even if $\sup_h Q(h) = \infty$. However, even though $Q(h)$ is finite for LLMs as there are only a finite number of inputs, $Q$ grows exponentially for NLL with the number of layers in the network and is closely related with the Lipschitz constant. For large models like LLMs, this value is far too large to be useful in constructing bounds.

\textbf{Language models and compression.}
Large language models are parameterized with as many as billions of parameters and, as a result, have a significant memory footprint, which makes pretraining, finetuning, and even evaluation challenging without access to large-scale computing infrastructure.
To reduce the memory footprint of large language models, a wide array of compression schemes has been proposed to enable evaluation, fine-tuning, and pre-training with limited computational resources.
Low-Rank Adaptation \citep[LoRA]{hu2021lora} freezes the pre-trained model weights and inserts trainable rank decomposition matrices into each attention layer of the transformer architecture used in large language models. Doing so allows for significantly reducing the number of trainable parameters for fine-tuning on downstream tasks.
For example, LoRA can reduce the number of trainable parameters in GPT-3 175B fine-tuned with Adam by a factor of 10,000 and the GPU memory requirement by a factor of 3.
Building on LoRA, Q-LoRA \citep{dettmers2023qlora} quantizes a pretrained model to 4-bits, adds a small set of learnable weights parameterized using LoRA, and then tunes these weights by backpropagating gradients through the quantized model.
Other compression methods for large language models use distillation \citep{liu2023llmqat}, sub-4-bit integer quantization \citep{kim2023memory, park2022lutgemm}, sparse quantized representations that identify and isolate outlier weights \citep{dettmers2023spqr}, weight quantization based on approximate second-order information \citep{frantal2022gptq}, or tensor-train decompositions \citep{xu2023tensorgpt}.

Achieving a good generalization bound has distinct requirements from the existing compression literature.
Unlike existing compression schemes for language models, which aim to accelerate inference and training or to reduce the memory footprint, we focus on specifying the trained model parameters in only few bits, even if doing so decreases neither latency nor memory requirements.

\section{Background}

\textbf{Subspace training.}
\citet{lotfi2022pac} train a compressible model by parameterizing a carefully constructed low-dimensional random subspace.
The weights $\theta \in \mathbb{R}^D$ are then defined as the sum of a random initialization $\theta_0$ and a projection $P \in \mathbb{R}^{D\times d}$ from a lower-dimensional subspace $w \in \mathbb{R}^d$: $\theta = \theta_0+Pw$.
$P$ is constructed as the Kronecker product of random Gaussian matrices \smash{$P = (Q_1 \otimes Q_2) / \sqrt{D}$ for $Q_1,Q_2 \sim \mathcal{N}(0,1)^{\sqrt{D} \times \sqrt{d}}$}, normalized so that \mbox{$P^\top P \approx I$}.
The weights $w$ can then be optimized over by backpropagating through the transformation.
With a learned quantization strategy---optimizing over quantized weights and the quantization levels---\citet{lotfi2022pac} use arithmetic coding to encode the weights using the empirical probabilities over quantization bins.

\textbf{Low Rank Adaptation (LoRA).} Similarly inspired by evidence that overparametrized models have low intrinsic dimensionality \citep{li2018measuring,aghajanyan2020intrinsic}, \citet{hu2021lora} propose LoRA as a parameter-efficient finetuning method.
Given a pretrained weight matrix $W_{\text{pretrained}}\in\mathbb{R}^{a\times b}$, LoRA decomposes its total update $\Delta W$ accumulated throughout finetuning as a product of two trainable low-rank matrices $U\in\mathbb{R}^{a\times r}, V\in\mathbb{R}^{r\times b}$ for $r \ll \min(a,b)$  while freezing $W_{\text{pretrained}}$.
Thus \smash{$W_{\text{finetuned}}=W_{\text{pretrained}}+\Delta W=W_{\text{pretrained}}+UV$}. 
In this work, we use LoRA for pretraining instead.
In particular, we take randomly initialized neural network weights $W_0\in\mathbb{R}^{a\times b}$ and represent their update during pretraining as $UV$, yielding $W_{\text{pretrained}}=W_{0}+\Delta W=W_{0}+UV$. 
We decrease the dimensionality further by applying subspace projection to the LoRA matrices, which we describe in detail in \Cref{sec:sublora}. 

\section{Methodology}
\label{sec:methodology}
In constructing non-vacuous generalization bounds for LLMs, we expand and improve upon existing techniques in three ways: (1) we construct a simple and effective nonlinear parameterization which is more effective and scalable than purely linear subspaces; (2) we construct new bounds that can handle the continuous and unbounded nature of the negative log-likelihood; (3) we make these bounds more practical to compute with LLMs by deriving a new bound which holds even when the empirical risk is evaluated only on a small subsample of the full training dataset.

\subsection{Finite Hypothesis Compression Based Generalization Bounds}
\label{sec:first-bounds}

Given a bounded risk $R(h,x) \in [a,a+\Delta]$ and a finite hypothesis space $h\in \mathcal{H}$ for which we have a prior $P(h)$, it is straightforward to derive a generalization bound relating the empirical risk \smash{$\hat{R}(h) = \frac{1}{m}\sum_{i=1}^m R(h,X_i)$} to the expected risk \smash{$R(h) = \mathbb{E}[\hat{R}(h)]$} so long as $\{X_i\}_{i=1}^m$ are sampled independently.
With probability at least $1-\delta$, as in  \citet{shalev2014understanding}, we can show that
\begin{align}\label{eq:bound}
    R(h) \le \hat{R}(h)+\Delta\sqrt{\frac{\log 1/P(h) + \log 1/\delta}{2m}}. 
\end{align}
We provide an elementary proof in \Cref{app:bound}.

If the prior likelihood $P(h)$ of the found model $h$ can be increased (either by choosing a better prior, or by finding more likely hypotheses), then the generalization bound improves. Following \citet{lotfi2022pac}, we adopt the powerful but general Solomonoff prior $P(h) \le 2^{-K(h|A)}$ \citep{solomonoff1964formal} where $K$ is the prefix Kolmogorov complexity of $h$, with the model architecture $A$ provided as input.
The Kolmogorov complexity of hypothesis $h$ is defined as the length of the shortest program that produces $h$ for a fixed programming language $P$ \citep{kolmogorov1963tables}.
While $K$ is not computable, it is possible to compute the upper bound 
\begin{align*}
    \log 1/P(h) \le K(h|A)\log 2 \le  C(h)\log2+2\log C(h),
\end{align*}
where $C(h)$ is the number of bits required to represent hypothesis h using some pre-specified coding.
Therefore, if we can find hypotheses $h$ that both have a low empirical risk \textit{and} a small compressed size, then we can construct strong generalization bounds even for large models that can represent many hypotheses. Indeed our bounds are often tighter for larger models.  We note that in contrast to standard PAC-Bayes bounds, these finite hypothesis bounds apply to deterministically trained models.

\subsection{Enabling the Independence Assumption for Generalization Bounds on Text Data}
Using \autoref{eq:bound} requires that $X_i$ in the sum  \smash{$\hat{R}(h) = \frac{1}{m}\sum_{i=1}^m R(h,X_i)$} are drawn independently. Thus, we must be careful in the construction and interpretation of our bounds so that this constraint is satisfied. Instead of considering bounds at the level of tokens, which are correlated, we instead define $X_i$ to be an entire document sampled from the data generating process from which the corpus was sampled. We define the risk on a given document as the negative log-likelihood of the entire document divided by its length, according to the autoregressive model.

It is also possible to choose $X_i$ to be a \emph{context chunk}, i.e., a sequence of length equal to the context length, as is commonly used in the training of models since a document may be larger than the maximum transformer context length.
In such cases, the sequences are no longer independent samples from the data generating process.
It is possible to construct valid bounds on these sequences which respect the independence assumption. However, in doing so we must shift the interpretation of the bounds from being over the randomness in sampling from the data generating process to the randomness in sampling sequences that can be constructed from a fixed and finite dataset formed by concatenating the documents together.

We explore these alternate sequence-level bounds in \autoref{app:chunk_bounds}. However, our focus in this paper is on document-level bounds.

\vspace*{3pt}
\begin{note}[Satisfying the I.I.D assumption] For document-level bounds, we sample the documents independently from the dataset. For sequence-level bounds, we divide the OpenWebText dataset into sequences of length equal to the context length and sample sequences independently so that a single sample from the dataset includes all of the tokens in the given sequence. Whether or not samples from a set are independent depends on how we sample from the set and not the contents of the set. 
We can draw independent samples from a set containing only similar elements, and the similarity of the elements does not affect the independence. Specifically, each element of our set is either an entire document or an entire sequence of tokens—not individual tokens—and we draw them independently.
\end{note}

\subsection{Accommodating the Unbounded NLL Objective Using Prediction Smoothing}

The primary metric for pretraining of large language models, as for other autoregressive models, is the negative log-likelihood (NLL), or bits per dimension (BPD), of the generative model.
The BPD loss is formally defined as the average over the negative log probabilities in logarithm base 2 as follows: $
\mathrm{BPD}(h, X) = - \frac{1}{k}\sum_i^k \log_2 p_h(x_i|x_{<i})$.
Unlike classification error which is a $\{0,1\}$ valued random variable, the log-likelihood is an unbounded quantity that does not have an obvious sub-Gaussian, or other, well-understood tail behavior.

To overcome this challenge, we construct generalization bounds for BPD not of the original model but instead on a smoothed version of it that limits the worst case behavior.
We define this smoothed model as a token-level mixture of the original LLM token predictions and a uniform distribution over the vocabulary of size $V$:\vspace*{-5pt}
\begin{align}\label{eq:mixture}
    p_h(x_i|x_{<i}) = (1-\alpha)p_{\theta}(x_i|x_{<i}) + \alpha/V ,
\end{align}\\[-15pt]
where $p_{\theta}(x_i|x_{<i})$ is the base model of token probabilities, $\alpha \in (0,1)$ is the mixing parameter, and $p_{h}(x_i|x_{<i})$ is the smoothed predictor.

The model on an entire document $X$ composed of $L$ tokens is defined autoregressively in terms of this mixture model $p_h(X) := \Pi_i^L p_h(x_i|x_{<i})$, and we find this to be a more effective way of constructing the bounds than constructing the mixture at the document level.
In analogy to label smoothing where the labels of the training objective are mixed with the uniform distribution, we term this operation as prediction smoothing.

As we show in \Cref{app:bounded}, the NLL of the prediction smoothed model on a document $\mathrm{BPD}(h, X):= -\log_2 p_h(X)/L$ can be bounded as follows:\vspace*{-5pt}
\begin{align*}
    \log_2 (V/\alpha) - \Delta & \le \mathrm{BPD}(h,X) \le \log_2 (V/\alpha),
\end{align*}\\[-15pt]
for $\Delta= \log_2 \big(1+(1-\alpha)V/\alpha\big)$.
With prediction smoothing, the risk $R(h,X)=\mathrm{BPD}(h, X)$ on a given document is bounded in an interval of size $\Delta$, and therefore we can use \Cref{eq:bound} to generate bounds for negative log-likelihood of this model. 
We refer to $\Delta$ as the worst-case interval size.

We explore the trade-off over different values of $\alpha$ in \Cref{fig2:effect-alpha} (right).
As $\alpha$ gets larger, the interval size $\Delta$ representing the worst-case behavior goes down, whereas the empirical risk goes up, leading to a sweet spot in the middle.
By defining the hypothesis $h=(\theta, d, r, \alpha)$ to include the model parameters, LoRA space hyperparameters $d,r$, and the mixture weight $\alpha$, we can view $\alpha$ as merely one additional model parameter accounted in $\log 1/P(h)$.
By doing so, we are free to optimize over $\alpha$ in the computation of the bound, and we can do so without retraining the model.

\begin{figure}[t!]
\centering
\includegraphics[width=1.0\linewidth]{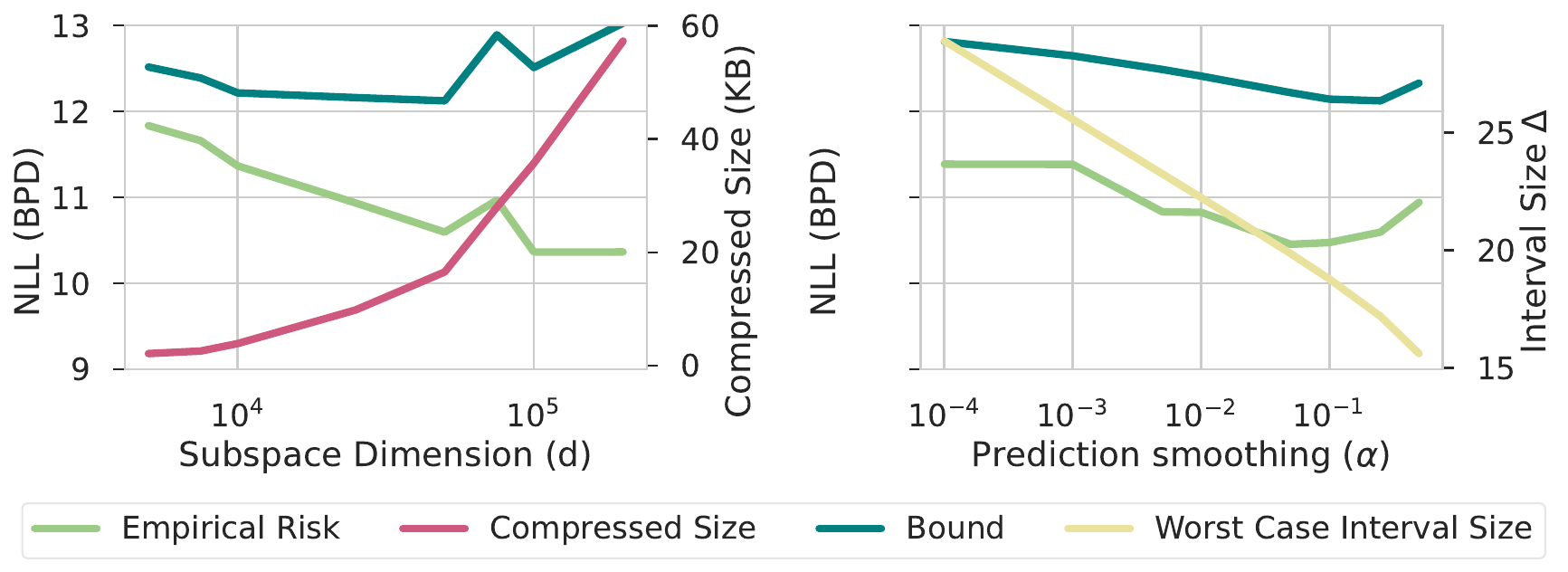}
    \caption{\textbf{Varying Parameters of the Compression Bounds.}
    \textbf{(Left):} A plot of the generalization bound as a function of the projection dimension $d$ with LoRA. The subspace dimension gives us a way to explicitly trade off the degree of compression with the empirical risk, and we optimize $d$ to produce the best bounds.
    \textbf{(Right):} A plot of the worst case range of BPD values $\Delta$, empirical risk, and the resulting generalization bounds as a function of the prediction smoothing parameter $\alpha$. For each model, a different alpha can be chosen after the models have already been trained.} 
    \label{fig2:effect-alpha}
    \vspace*{-8pt}
\end{figure}

\setlength{\tabcolsep}{9.0pt}
\begin{table*}[t!]
\centering
\caption{\textbf{Non-vacuous generalization bounds for GPT-2 compressed models.} Our best document-level generalization bounds achieved for the GPT-2 architecture for BPD and Top-k token prediction error, all of which are non-vacuous.}
\vspace*{5pt}
\begin{tabular}{l|c|c|c|c|c}
\toprule
Metric & SubLoRA & LoRA Only & Subspace Only & Original Model & Random Guess \\ 
\midrule
Top-1 Error Bound (\%) & $\mathbf{96.41}$ & 100 & 96.52 & 100 & 99.99  \\
Top-10 Error Bound (\%) & $\mathbf{77.90}$ & 84.37   & 79.36 & 100 & 99.98 \\
Top-100 Error Bound (\%) & $\mathbf{58.34}$ & 67.26   & 75.95 & 100 & 99.80  \\
Bits per Dimension Bound & $\mathbf{12.12}$ & 13.09   & 14.59 & 70.76 & 15.62  \\
\bottomrule
\end{tabular}
\label{tab:error-comparison-scratch}
\end{table*}

\subsection{Using Subsampling in Bound Computation}
\label{sec:final-bounds}

The empirical risk requires evaluating the model on the full training dataset of $m$ data points: \smash{$\hat{R}(h) = \frac{1}{m}\sum_{i=1}\hat{R}_i(h)$}.
As large language models are typically trained for only $1$ epoch or less, doing so is prohibitively expensive. Instead, we propose to modify our generalization bounds to account for evaluating only a subsample of size $n \ll m$ of the training dataset when computing the empirical risk.

Denoting \smash{$\hat{\hat{R}}(h) = \sum_{i=1}^n \hat{R}_{\sigma(i)}(h)$} where $\sigma(i)$ is a random sample (with replacement) from $1, \dots,  m$. In \Cref{app:subsampling} we derive a new bound both over the randomness in $\sigma(i)$ and the randomness in $X$ which holds with probability $\ge 1-\delta$:
\begin{align}\label{eq:subset_bound}
    R(h) \le \hat{\hat{R}}(h)+\Delta\sqrt{\frac{\log \tfrac{1}{P(h)} + \log \tfrac{1}{s\delta}}{2m}} + \Delta \sqrt{\frac{\log \tfrac{1}{(1-s)\delta}}{2n}} ,
\end{align}\\[-10pt]
where $s=n/(n+m)$.  Using this subsampling bound, we can accelerate bound computation.
For dataset sizes in the $10$'s of millions, we can get away with evaluating only $10,000$ data points after the model has been trained, with a negligible penalty in the bounds.
In fact, we need not even train on the entirety of the training data in order to produce valid bounds as long we sample uniformly. 

\section{SubLoRA: A Simple and Efficient Nonlinear Parameterization of the Hypothesis Space}
\label{sec:sublora}

To find compressible solutions $h$ that simultaneously are expressive enough to achieve low training error, we search over a carefully designed manifold of possible parameters that live within the parameter space.

In contrast to \citet{lotfi2022pac}, we consider a nonlinear parameterization of the model weights $\theta = f(\theta_0, w)$ given by the composition of LoRA \citep{hu2021lora} (a nonlinear parameterization) and the subspace compression matrices. 
Given a vector of model parameters $\theta$, we break down its constituent components into the different weight matrices $W_i$ and associated biases $b_i$: $\mathrm{unflatten}(\theta) = \{(W_i, b_i)\}_{i\in I}$.
We define a nonlinear parameterization of the hypothesis space as\vspace*{-8pt}
\begin{align}
\label{eq:non-linear}
    \theta = \theta_0+\mathrm{LoRA}(Pw),
\end{align}\\[-12pt]
where $\mathrm{LoRA}$ is defined by the implementation of the low-rank products for the weight matrices, leaving the biases unchanged.
As $Pw$ and $\theta$ are the flattened parameter vectors, $\mathrm{LoRA}(\cdot)$ is defined as the operation that unflattens the vector, applies the low-rank product, and then flattens the result. 
Here, $\theta_0$ is merely a random initialization of the model parameters, and $P \in \mathbb{R}^{D\times d}$ is a Kronecker product projector \smash{$P=Q_1 \otimes Q_2$} for $Q_1,Q_2$ constructed by orthogonalizing Gaussian random matrices by QR factorization: $P_1,P_2 \sim \mathcal{N}(0,1/\sqrt{D})^{\sqrt{D}\times \sqrt{d}}$ with $Q_1 R_1 = P_1$ and similarly for $Q_2$.
We apply LoRA only over the self-attention layer and the last linear layer weight matrices, meaning that other model parameters do not differ from their initialized values.
In order to compress the model, we need only to represent the vector $w$ since $\theta_0$ and $P$ are chosen ahead of time and specified in the architecture via random initialization.

\begin{note}[Selecting the LoRA layers]
While LoRA was developed for finetuning LLMs, we find that even when pretraining using LoRA, we can achieve non-trivial performance. 
Our initial exploration of LoRA for pretraining involved applying LoRA not only to attention layers but to all other linear layers as well. 
We found that for pretraining, it is more efficient to use LoRA for both the attention layers and the last linear layer, while including other layers provides insignificant returns.
\end{note}

In \Cref{fig:fig1} (left), we show the Pareto frontier of empirical risk and the complexity penalty in the relevant generalization bound with LoRA, subspace training, and SubLoRA.
Rather than being competing methods for compression, LoRA and subspace training are complementary and exploit different structures in the parameter space to provide a family of models in the original hypothesis space that are both expressive and compressible. 
SubLoRA achieves a strict improvement over LoRA and subspace training, often being the deciding factor whether the bounds are vacuous or non-vacuous.
In \Cref{fig2:effect-alpha} (left), we explore how the compressed size of the model and the empirical risk vary as a function of the subspace dimension $d$.

\setlength{\tabcolsep}{9.0pt}
\begin{table*}[h!]
\centering
\caption{\textbf{Validation performance vs. bounds.} Validation performance of the models achieving the best bits per dimension bound for each setting. For models trained using different compression techniques, SubLoRA performs best in terms of validation loss. Although the original model achieves the best validation BPD and NLL, it leads to vacuous bounds given its large number of parameters.}
\vspace*{5pt}
\begin{tabular}{l|c|c|c|c|c}
\toprule
Metric & SubLoRA & LoRA Only & Subspace Only & Original Model & Random Guess \\ 
\midrule
Bits per Dimension Bound & $12.12$ & 13.09   & 14.59 & 70.76 & 15.62  \\
Validation BPD &   10.53  &    11.21      &  13.85    &   4.35 & --  \\
Validation NLL &  7.29     &  7.77   &  9.60   &  3.01   &-- \\
\bottomrule
\end{tabular}
\label{apptab:val-perf}
\end{table*}

\section{Non-Vacuous Generalization Bounds for Large Language Models}

We outline the pretraining and bound computation pipeline and then present our empirical results.

\subsection{End-to-end Pipeline} 
Assembling the components described in \Cref{sec:methodology}, we train variants of a GPT-style architecture through the nonlinear compressed parameterization in \Cref{eq:non-linear}.
We use several values for the subspace dimension $d$ and two values for the rank of the LoRA matrices $r$.
Nearing the end of training, we train for additional steps using quantization-aware training with a small number of quantization levels (with additional details listed in \Cref{app:exp-details}). 
We express $w$ in this quantization and encode it using arithmetic coding to determine the compressed size of the model. 
Added to the size of the model are the bits needed to encode the choice of $d,r$, $\alpha$, the learning rate, and the quantization levels. 

We evaluate the empirical log probabilities and token predictions for each token in the document on a small subset of the training data $n=10,000$ documents. 
We use this sub-sampling size to evaluate all the bounds reported in the paper.
With these predictions, we can compute the generalization bound in \Cref{eq:subset_bound} as a function of $\alpha$, and we optimize over this parameter for each model.
Finally, we can tune the extent of compression through the different choices of $d$ and choose the subspace dimension that produces the best bound. 
We provide a pseudo-code and additional details about the bound computation in \Cref{app-sec:bound-computation}. 

\subsection{Non-Vacuous Bounds for GPT-2 Small}
\vspace*{-4pt}

We consider the GPT-2 small architecture with $124$M parameters and compute our next token prediction document-level bounds by pretraining these models on the OpenWebText dataset using SubLoRA.
We report the results in \Cref{tab:error-comparison-scratch}. 
We consider the token level error averaged over a document as the empirical risk.
For instance, the Top-1 Error Bound refers to the upper bound on the expected Top-1 error per token averaged over the document $R(h,X_k) = \frac{1}{L} \sum_{i=1}^L \mathbf{1}{[\operatorname{argmax_{x_i}} p(x_{i}|x_{<i}=x_{<i}^k) =x_{i}^k]}$, where the upper index $k$ denotes the document index and the lower index $i$ denotes the position within the document. 
The $\operatorname{argmax}$ operator operates over tokens $x_i$ across the vocabulary. 
Thus, $R(h,X_k)$ represents the proportion of tokens accurately predicted within a given document.
Random guess performance is $\log_2 V$ for BPD and $1-k/V$ for Top-k Error.

The best bounds are indeed obtained using our simple compression technique, which combines the strengths of both low-rank adaptation and subspace training. 
When we solely apply quantization and arithmetic coding without implementing LoRA or linear subspace compression during the training phase, we obtain vacuous bounds.

\subsection{Validation Performance}

In \cref{apptab:val-perf}, we report the validation performance of the models that achieve the best bounds in \cref{tab:error-comparison-scratch}. In particular, we report the bits per dimension (BPD) and negative log-likelihood (NLL) losses alongside the generalization bounds. For compressed models, we see that the bounds have the same trend as the validation performance, as all these models contain a smaller number of parameters compared to the initial parameter space. Although the original model with the full set of parameters achieves the best validation performance, it leads to vacuous bounds given its large number of parameters.

It is important to note that we report the validation BPD loss of the model achieving the best bounds, and not the best validation BPD loss achieved by the different compression schemes, which can be lower depending on the subspace parameters. \cref{fig2:effect-alpha} (left) reflects this trade-off between the compressed size of the model and the empirical risk.

\begin{figure*}[t!]
\centering
    \includegraphics[width=0.87\linewidth]{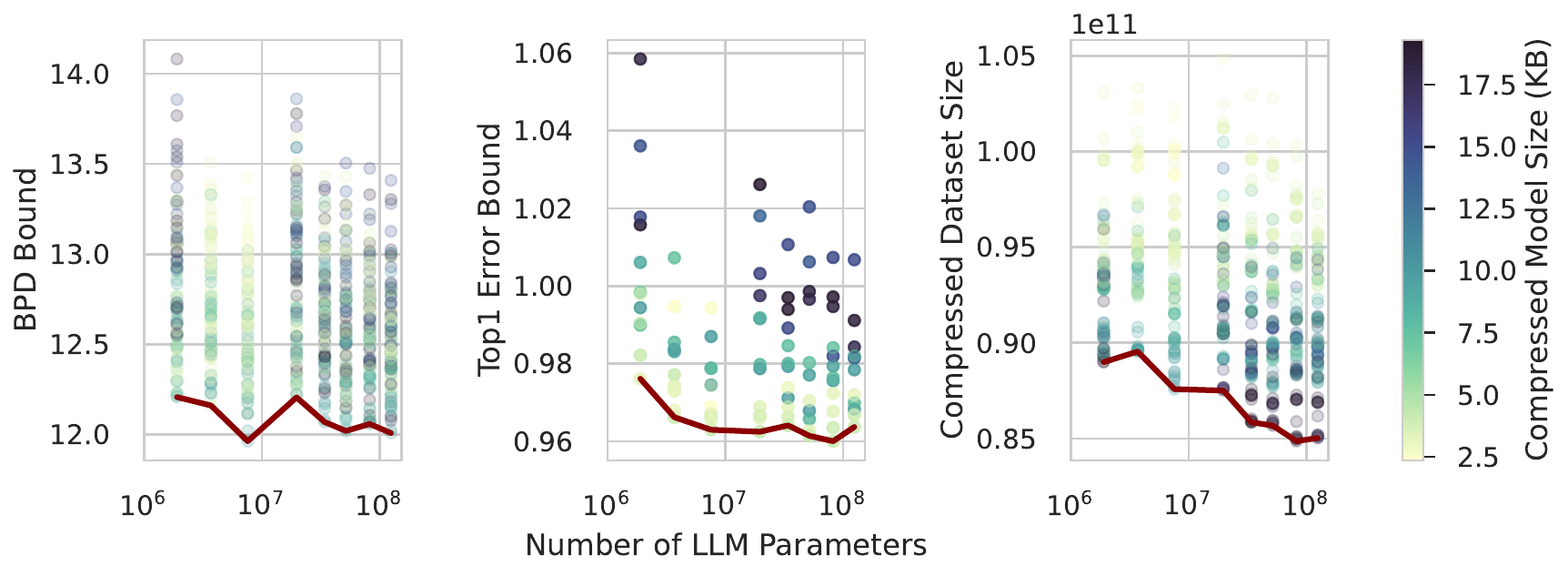}
    \vspace*{-7pt}
    \caption{\textbf{Larger models achieve stronger generalization bounds.} As we scale up the size of the model via the model parameters (holding the training set fixed), we find that our generalization bounds get \emph{better} rather than worse. Dots show models trained with differing degrees of compression, indicated by their color. On the right we show the number of bits required to express the training dataset using the model and including the model weights in the compression. Classification error bounds consistently favor smaller models, while data compression favors much larger models, and BPD bounds are in between.}
    \label{fig:llms-scale-vs-compression}
\end{figure*}

\vspace*{-5pt}
\subsection{Extending Our Bounds to Larger Models}
\vspace*{-4pt}

We use SubLoRA to obtain generalization bounds for much larger variants of GPT-2 of sizes $354$M (GPT-2 medium), $458$M, $773$M (GPT-2 large), and $849$M parameters.  
\Cref{tab:larger-model-bounds} shows that our simple compression approach yields non-vacuous bounds for models with nearly a billion parameters. 
Moreover, we see that the smallest model, where we previously performed experiments and tuned our hyperparameters, actually achieves the worst bound on bits per dimension as we scale the models up.
In conclusion, our approach extends naturally to much larger language models and proves that it is possible to achieve tighter bounds as we increase the size of the model.

\begin{table}[h!]
\vspace*{-8pt}
\centering
\caption{\textbf{Non-vacuous generalization bounds for models with up to 849M parameters.} Non-vacuous bounds achieved for GPT-2 architectures with different sizes, ranging from $124$ to $849$ million parameters. We report below the bounds on the bits-per-dimension (BPD), Top-1 Error, and Top-100 Error.
All of the BPD bounds are non-vacuous and tighter than the GPT-2 small bounds.}
\vspace*{5pt}
\resizebox{\columnwidth}{!}{
\begin{tabular}{l|c|c|c}
\toprule
Model Size & BPD & Top-1 Error & Top-100 Error\\ 
\midrule
$124$M & $12.12$ & $96.41$ & $58.34$  \\
$354$M & $11.96$ & $95.99$   & $58.4$ \\
$458$M & $11.95$ & $96.69$  & $58.49$  \\
$773$M & $12.10$ & $96.17$  & $59.25$  \\
$849$M & $12.01$ & $96.51$  & $58.89$  \\
\bottomrule
\end{tabular}
}
\label{tab:larger-model-bounds}
\end{table}

\vspace*{3pt}
\begin{note}[Limitations]
Note that due to computational constraints, we pre-train the larger GPT-2 variants with SubLoRA only for a limited number of hyperparameter settings in contrast to the $124$M model for which we did a thorough hyperparameter sweep. It is likely that the tightest empirically achievable bounds are much stronger for the new large models than what we report in \Cref{tab:larger-model-bounds}.
\end{note}

\section{Understanding the Generalization of LLMs}
\label{sec:understanding-llms}

As language models grow in size, it is clear that they gain an increasing capacity to fit their training data.
On the one hand, this increasing capacity might mean that, as LLMs become capable of learning increasingly complex functions, they become increasingly likely to merely memorize their training samples and not perform any meaningful generalization beyond their training corpora.
On the other hand, large language models have proven to be surprisingly capable of generalizing, often extending to tasks that seem quite different from the training objective.

We investigate the tension between these two narratives along several fronts: We assess how generalization bounds change with the size of the model, whether language models can form a compression of the training data even when accounting for their large size, and how structure in the training data affects the generalization of the learned model.
In \Cref{app:llms-finetuning}, we use our bounds to quantify of the benefits of pre-training in LLMs.

\subsection{Larger Models Are More Compressible and Generalize Better}
Empirically, it has been found that LLMs generalize better as the number of parameters is increased, with a fixed size of dataset \citep{kaplan2020scaling,brown2020language}, and this fact is of great importance leading to the creation of ever larger and more powerful models.
From a generalization theory perspective, this trend is counterintuitive because of the growing hypothesis class, and a naive analysis would suggest that larger models should generalize worse. To date, we are not aware of any convincing demonstration that generalization bounds improve with more parameters on models of practical sizes.

We evaluate our bounds on a collection of LLMs with different numbers of parameters, choosing the appropriate scaling for the width, depth, number of attention heads, etc.
Surprisingly, we find that our generalization bounds in fact \emph{improve} with model size, even as the training dataset is held fixed.
With our SubLoRA compression, larger models find even simpler representations of the data given a fixed training set.
These results are shown in \Cref{fig:llms-scale-vs-compression}.
While some explanations for why larger models should generalize better have been put forward in the literature \citep{nakkiran2021deep,gunasekar2017implicit}, the mechanism by which larger models become more compressible is not clear, and we believe this result is noteworthy and requires further investigation.

In addition to constructing generalization bounds, we can use our compressed models to form a compression of the training dataset itself. In \autoref{fig:llms-scale-vs-compression}, we count the number of bits needed to encode the model $C(h)$ and the number of bits to encode the data using the model $C(\{X\}_{i=1}^m|h)$, which is the negative log-likelihood of the entire dataset according to the model. 
Adding these two up, we have a compression of the training dataset using the model, and one which is closely related to our generalization bounds.

\subsection{How Does Generalization of Large Language Models Depend on Structure in Text?}
Neural networks that fit a training dataset of random noise will not be able to generalize, and the ability of overparametrized networks to fit noise implies that uniform convergence is impossible across the general hypothesis class \citep{nagarajan2019uniform}. 
This is a clear demonstration that the structure of the dataset influences generalization. However, the impact of more subtle structures is less understood theoretically. Here, we use our bounds to investigate how the temporal order structure relates to generalization. 

We train models that explicitly break the temporal structure of the text data by applying random permutations to each sequence during training. Consequently, the model can only make use of the input information as if it were a bag of words. We find that this broken order structure indeed leads to less favorable generalization bounds. 
\autoref{fig:permute} shows the best error bounds when the original and perturbed data are used to train the model and evaluate the bounds for the bits per dimension, top-1 error, and top-100 error losses.
While the top-1 error bound becomes vacuous as we break the text structure, the top-100 error and bits per dimensions bounds remain non-vacuous.
This might be due to the fact that as we perturb the sequence, predicting the next token accurately becomes an extremely difficult task for LLMs, while predicting a token that fits generally into the context, without necessarily being the correct token, is an easier task.

\begin{figure}[h!]
\vspace*{-5pt}
\centering
\includegraphics[width=.4\textwidth]{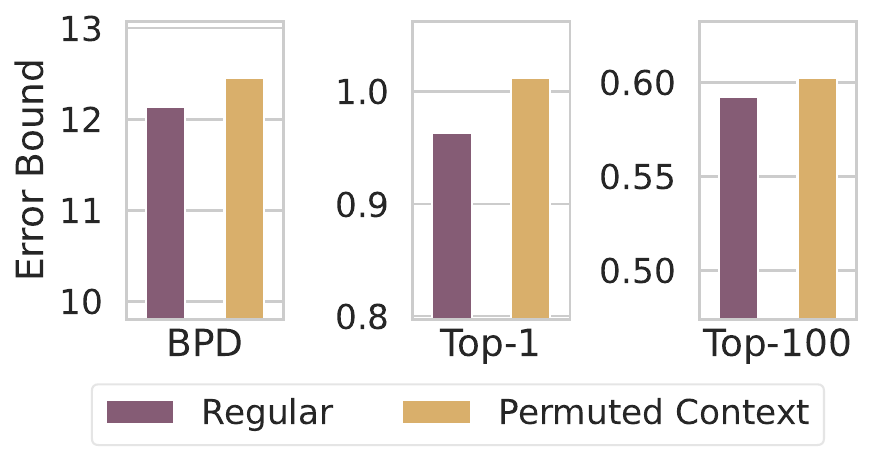}
\vspace*{-8pt}
\caption{\textbf{Breaking text structure with permutations.} We compute bounds for LLMs that were trained with the order of the tokens shuffled within each sequence.}
\label{fig:permute}
\vspace*{-10pt}
\end{figure}

\section{Discussion}

In this work, we demonstrated that large language models can themselves form highly compressed representations of distributions over text.
Using these highly compressed LLMs, we compute the first non-vacuous generalization bounds for LLM pretraining.
Our findings suggest that the development of tighter compression bounds presents a fruitful avenue for understanding how and why language models generalize.

We discuss below the limitations of our work and their implications for future work.

\textbf{Non I.I.D. token level bounds.}
In our work, we split up the training data into I.I.D. documents that form the basis of our bounds.
However, the loss for each of these documents also decomposes as a sum over non I.I.D. tokens, and it is likely that this additional structure could also be exploited in the bound construction to significantly increase the effective number of training samples.

\textbf{Efficient bound computation on pretrained models.}
Our procedure for computing generalization bounds requires training LLMs from scratch through our SubLoRA parametrization. It may be possible to devise a fast method of computing bounds on a model that has already been trained, but still constraining its generalization error.

\textbf{Nonlinear parameterizations.} Unlike previous state-of-the-art bounds from \citet{lotfi2022pac}, we employ a nonlinear parameterization via LoRA, significantly improving the bounds.  This observation opens up an avenue for rich nonlinear parameterizations that simultaneously reduce the number of parameters while also including diverse functions which are likely to fit the training data. 

\textbf{Bounds for models that generate high quality text.}
In \Cref{tab:gpt2text} and \Cref{tab:gpt2subloratext}, we show samples of generated text using both a GPT-2 style model pretrained in the standard fashion and a GPT-2 style model pretrained using SubLoRA.
While the vanilla GPT-2 style model produces reasonable sentences, the SubLoRA pretrained model often outputs ungrammatical text. 

\textbf{Alternative approaches to learning with LLMs.} Modern language models make possible new inference techniques such as in-context learning and prompt-tuning. These modes are already seeing widespread deployment and warrant analogous theories of generalization.

\textbf{Generalization beyond the training distribution.}
Recent work showed that language models prefer low-complexity numerical sequences on which they were not trained, even at random initialization \citep{goldblum2023no}, and generalization theory may be useful for explaining why LLMs can generalize far outside of their training distribution, and even outside of the text modality, for example to tabular data \citep{hegselmann2023tabllm} or images \citep{deletang2023language}.

We hope that future work will address these limitations and that our approach can help lay the groundwork for computing LLM generalization bounds for even larger models.

\section*{Acknowledgements}
\label{sec: acknowledge}

We acknowledge anonymous reviewers for helpful feedback. This work is supported by NSF CAREER IIS-2145492,
NSF CDS\&E-MSS 2134216, NSF HDR-2118310, BigHat Biosciences, Capital One, and an Amazon Research Award.

\section*{Impact Statement}
\label{sec:impact}

The goal of this work is to advance our understanding of large language models and improve their trustworthiness.
However, we note that there are many facets of large language models---such as biases---that may not be captured by generalization bounds.

\bibliography{references}
\bibliographystyle{icml2024}

\newpage
\appendix
\onecolumn

\vbox{
\hsize\textwidth
\linewidth\hsize
\vskip 0.1in
\hrule height 4pt
  \vskip 0.25in
  \vskip -\parskip
\centering
{\Large\bf
Appendix
for \\
\vspace{0.15cm}
\hspace{-0.08cm} Non-Vacuous Generalization Bounds for Large Language Models 
\par}
\vskip 0.29in
  \vskip -\parskip
  \hrule height 1pt
  \vskip 0.09in
}

\section*{Appendix Outline}
The appendix is organized as follows: 
\begin{itemize}
\item In \Cref{app:proofs-for-bounds}, we provide proofs of the finite hypothesis bound, smoothed prediction bounds for the log-likelihood objective, and the subsampling bounds. 
\item In \Cref{app:chunk_bounds}, we present our sequence-level bounds for the bits-per-dimension, Top-1 error, Top-10 error and Top-100 error. 
\item In \Cref{app-sec:bound-computation}, we provide a pseudo-code and additional details on bound evaluation. 
\item In \Cref{app:llms-finetuning}, we discuss our LLM fine-tuning bounds and highlight the value of pre-training large language models.  
\item In \Cref{app:exp-details}, we provide the experimental details and hyperparameter selection for our experiments, and describe the document-level empirical risk evaluation. 
\item Lastly, in \Cref{app:text-generation} we compare the text generated by a GPT-2 small model pre-trained in the standard fashion and a GPT-2 small model pre-trained with SubLoRA. 
\end{itemize}

\vspace*{20pt}

\section{Derivations and Generalization Bounds}\label{app:proofs-for-bounds}

\subsection{Finite Hypothesis Bound}\label{app:bound}

\begin{theorem}
    Consider a bounded risk $R(h,x_i) \in [a,a+\Delta]$ and a finite hypothesis space $h\in \mathcal{H}$ for which we have a prior $P(h)$ that does not depend on $\{x_i\}$. Let the empirical risk $\hat{R}(h) = \frac{1}{m}\sum_{i=1}^m R(h,x_i)$ be a sum over independent random variables $R(h,x_i)$ for a fixed hypothesis $h$. Let $R(h) = \mathbb{E}[\hat{R}(h)]$ be the expected risk.

With probability at least $1-\delta$:
\begin{align}\label{eqapp:bound}
    R(h) \le \hat{R}(h)+\Delta\sqrt{\frac{\log 1/P(h) + \log 1/\delta}{2m}},
\end{align}
\end{theorem}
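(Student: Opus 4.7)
The plan is to combine a concentration inequality for a fixed hypothesis with a prior-weighted union bound over the (finite) hypothesis space $\mathcal{H}$. Fix $h \in \mathcal{H}$. Since $R(h, x_i) \in [a, a+\Delta]$ and the $x_i$ are i.i.d., the empirical average $\hat{R}(h)$ is a mean of $m$ independent bounded random variables with common expectation $R(h)$. By Hoeffding's inequality, for any $\varepsilon > 0$,
\[
  \Pr\bigl( R(h) - \hat{R}(h) > \varepsilon \bigr) \le \exp\!\left( -\frac{2m\varepsilon^2}{\Delta^2} \right).
\]
So for any preselected per-hypothesis failure level $\delta_h \in (0,1)$, solving $\exp(-2m\varepsilon^2/\Delta^2) = \delta_h$ gives that with probability at least $1-\delta_h$,
\[
  R(h) \le \hat{R}(h) + \Delta\sqrt{\frac{\log 1/\delta_h}{2m}}.
\]

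The key idea is then to choose $\delta_h$ \emph{adaptively} per hypothesis, in a way weighted by the prior. Specifically, set $\delta_h := \delta\, P(h)$. Because $P$ is a probability mass function on $\mathcal{H}$, we have $\sum_{h \in \mathcal{H}} \delta_h = \delta \sum_{h} P(h) \le \delta$. Define the bad event $E_h := \{R(h) - \hat{R}(h) > \Delta\sqrt{(\log 1/\delta_h)/(2m)}\}$; the single-hypothesis Hoeffding bound gives $\Pr(E_h) \le \delta_h$. Taking a union bound over $h \in \mathcal{H}$,
\[
  \Pr\!\left(\bigcup_{h \in \mathcal{H}} E_h\right) \le \sum_{h \in \mathcal{H}} \delta_h \le \delta.
\]
On the complement, which has probability at least $1 - \delta$, the bound $R(h) \le \hat{R}(h) + \Delta\sqrt{(\log 1/\delta_h)/(2m)}$ holds \emph{for every} $h \in \mathcal{H}$ simultaneously. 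Substituting $\delta_h = \delta P(h)$ and using $\log 1/\delta_h = \log 1/P(h) + \log 1/\delta$ yields the claimed inequality.

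There is no real obstacle here once the prior-weighted union bound trick is in place; the only subtlety worth being explicit about is that $P(h)$ must not depend on the sample (which is part of the hypothesis), so that $E_h$ is well defined relative to the randomness of $\{x_i\}$ alone, and that $P$ need only satisfy $\sum_h P(h) \le 1$ rather than equal $1$ (so it may be a sub-probability measure coming e.g.\ from the Kraft inequality for prefix codes, as used later with the Solomonoff/Kolmogorov prior $P(h) \le 2^{-K(h|A)}$). With this, the argument extends verbatim from finite to countable $\mathcal{H}$, which is what the compression bounds in the main text actually require.
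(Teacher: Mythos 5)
Your proof is correct and follows essentially the same route as the paper: a Hoeffding bound for each fixed hypothesis, followed by a prior-weighted union bound with the per-hypothesis failure probability set to $\delta P(h)$. The only cosmetic difference is that you parameterize by $\delta_h$ and solve for $\varepsilon$, whereas the paper picks $t(h)$ directly to satisfy $\exp(-2mt(h)^2/\Delta^2)=P(h)\delta$; your remarks on sub-probability priors and countable $\mathcal{H}$ are accurate observations, not departures from the argument.
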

\begin{proof}
    As $m\hat{R}(h)$ is the sum of independent and bounded random variables, we can apply Hoeffding's inequality \citep{hoeffding1994probability} for a given choice of $h$. For any $t>0$
    \begin{align*}
        P(R(h) \ge \hat{R}(h) + t)&=P(mR(h) \ge m\hat{R}(h) + mt) \\
        P(R(h) \ge \hat{R}(h) + t)&\le \exp{(-2mt^2/\Delta^2)}.
    \end{align*}
    We will choose $t(h)$ differently for each hypothesis $h$ according to
    \begin{align*}
        \exp{(-2mt(h)^2/\Delta^2)} = P(h)\delta.
    \end{align*}
    Solving for $t(h)$, we have
    \begin{align}
        t(h) = \Delta \sqrt{\frac{\log 1/P(h) + \log 1/\delta}{2m}}
    \end{align}
    This bound holds for a fixed hypothesis $h$. However $h$ was constructed using the training data, so for $h^*(\{x\})$, the random variable,
    \begin{align*}
        \hat{R}(h^*) = \frac{1}{m}\sum_{i=1}^m R(h^*(\{x\}),x_i) ,
    \end{align*}
    cannot be decomposed as a sum of independent random variables. Since $h^* \in \mathcal{H}$, if we can bound the probability that $R(h)\ge \hat{R}(h)+t(h)$ for \emph{any} $h$, then the bound also holds for $h^*$.

    Applying a union over the events $\bigcup_{h\in \mathcal{H}} \big[R(h)\ge \hat{R}(h)+t(h)\big]$, we have
    \begin{align*}
        P(R(h^*) \ge \hat{R}(h^*) +t(h^*)) &\le P\big(\bigcup_{h\in \mathcal{H}} \big[R(h)\ge \hat{R}(h)+t(h)\big]\big)\\
        &\le \sum_{h\in \mathcal{H}} P\big(R(h)\ge \hat{R}(h)+t(h)\big)\\
        &\le \sum_{h\in \mathcal{H}} P(h)\delta = \delta .
    \end{align*}    Therefore we conclude that for any $h$ (dependent on $x$ or not), with probability at least $1-\delta$,
    \begin{align*}
        R(h) \le \hat{R}(h)+\Delta\sqrt{\frac{\log 1/P(h) + \log 1/\delta}{2m}} .
    \end{align*}
\end{proof}

\begin{note}[Satisfying the finite hypothesis space assumption] We consider neural networks as programs which run on a computer, and therefore they must have a finite size. For instance, without using any compression, the weights of a neural network are typically represented using floating point numbers, and therefore the weights of real models can only take on a finite number of values determined by the precision. If we consider a compression of the hypothesis, then we can express it in a smaller number of bits, which is again finite and not real valued. From this perspective, all neural networks that we train and deploy in practice belong to a finite hypothesis space given a fixed architecture. Moreover, 
despite the name, finite hypotheses bounds make it clear that we should avoid strictly counting hypotheses and instead understand generalization from the perspective of which hypotheses are a priori likely. Indeed, our bounds are often tighter for larger models that can represent more hypotheses.
\end{note}

\subsection{Bounding Log-Likelihood}\label{app:bounded}
\begin{theorem}
    Given $\alpha \in (0,1)$, an $\alpha$ prediction smoothed autoregressive language model $h$ over a token vocabulary of size $V$ for a given sequence $X$ will have a $\mathrm{BPD}(h,X)$ that lies in the interval
    \begin{align}
        \mathrm{BPD}(h,X) \in \big(\log_2 (V/\alpha) - \log_2 \big(1+(1-\alpha)V/\alpha\big), \log_2 (V/\alpha)\big),
    \end{align}
    and the size of the interval is $\Delta = \log_2 \big(1+(1-\alpha)V/\alpha\big)$.
\end{theorem}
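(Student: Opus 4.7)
The plan is to reduce the document-level BPD to a per-token quantity by noting that $\mathrm{BPD}(h,X) = -\frac{1}{L}\sum_{i=1}^L \log_2 p_h(x_i\mid x_{<i})$ is an average of per-token negative log-likelihoods, so any interval that contains every per-token term also contains their average. This reduces the problem to bounding the single-token mixture probability $p_h(x_i\mid x_{<i}) = (1-\alpha)p_\theta(x_i\mid x_{<i}) + \alpha/V$ from above and below.

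Since $p_\theta(x_i\mid x_{<i}) \in [0,1]$, the smoothed probability satisfies
\begin{align*}
\alpha/V \;\le\; p_h(x_i\mid x_{<i}) \;\le\; (1-\alpha) + \alpha/V,
\end{align*}
the lower extreme coming from $p_\theta = 0$ and the upper extreme from $p_\theta = 1$. Applying $-\log_2$ (which reverses the inequalities) gives
\begin{align*}
-\log_2\!\bigl(1 - \alpha + \alpha/V\bigr) \;\le\; -\log_2 p_h(x_i\mid x_{<i}) \;\le\; \log_2(V/\alpha).
\end{align*}
Averaging over $i=1,\dots,L$ inherits these same bounds, so $\mathrm{BPD}(h,X)$ lies in the stated interval with upper endpoint $\log_2(V/\alpha)$.

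The final step is a routine algebraic simplification of the interval width: I would compute
\begin{align*}
\Delta \;=\; \log_2(V/\alpha) + \log_2\!\bigl(1 - \alpha + \alpha/V\bigr) \;=\; \log_2\!\Bigl(\tfrac{V}{\alpha}\bigl(1-\alpha+\tfrac{\alpha}{V}\bigr)\Bigr) \;=\; \log_2\!\Bigl(1 + \tfrac{(1-\alpha)V}{\alpha}\Bigr),
\end{align*}
which matches the claimed $\Delta$ and shows the lower endpoint equals $\log_2(V/\alpha) - \Delta$. I do not expect a real obstacle here: the mixture structure directly controls both tails of the per-token log-probability, and the only subtlety is observing that the averaging step is what allows us to ignore the non-i.i.d.\ structure across tokens for the purpose of range containment (we are bounding the range of $\mathrm{BPD}$, not its variance). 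If one wanted closed rather than open endpoints the same argument works verbatim, since both extremes are attained in the limiting cases $p_\theta \in \{0,1\}$.
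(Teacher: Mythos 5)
Your proof is correct and follows essentially the same route as the paper: decompose $\mathrm{BPD}$ into per-token negative log-probabilities, bound each term via the mixture structure $p_h = (1-\alpha)p_\theta + \alpha/V$ with $p_\theta$ at its extremes, and note that the average inherits the per-token range. The only cosmetic difference is that the paper takes $p_\theta \in (0,1)$ to get strict inequalities and an open interval, whereas you use $p_\theta \in [0,1]$ for closed endpoints, which you correctly observe changes nothing material.
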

\begin{proof}

The BPD decomposes as the average over the negative log probabilities,
\begin{align*}
    \mathrm{BPD}(h, X) &= - \frac{1}{k}\sum_i^k \log_2 p_h(x_i|x_{<i}) .
\end{align*}
Since $p_{\theta}(x_i|x_{<i}) \in (0,1)$, we can conclude that
\begin{align*}
    -\log_2 p_h(x_i|x_{<i}) &= -\log_2\big((1-\alpha)p_{\theta}(x_i|x_{<i}) + \alpha/V\big)\\
   -\log_2 p_h(x_i|x_{<i}) &< \log_2(V/\alpha)
\end{align*}
and 
\begin{align*}
     -\log_2 p_h(x_i|x_{<i}) &= -\log_2\big((1-\alpha)p_{\theta}(x_i|x_{<i}) + \alpha/V\big) > -\log_2\big((1-\alpha) + \alpha/V\big)\\ -\log_2 p_h(x_i|x_{<i})&> -\log_2\bigg(\tfrac{\alpha}{V}\big(1+(1-\alpha)V/\alpha\big)\bigg)\\
     -\log_2 p_h(x_i|x_{<i})&> \log_2 (V/\alpha) - \log_2 \big(1+(1-\alpha)V/\alpha\big) .
\end{align*}
Since each element $-\log_2 p_h(x_i|x_{<i})$ of the average is in the interval $\big(\log_2(V/\alpha)-\Delta, \log_2(V/\alpha)\big)$, so is $\mathrm{BPD}(h, X)$.
   
\end{proof}

\subsection{Subsample Bounds}\label{app:subsampling}

Denoting $\hat{\hat{R}}(h) = \tfrac{1}{n}\sum_{i=1}^n \hat{R}_{\sigma(i)}(h)$ where $\sigma(i)$ is a random sample (with or without replacement) from $1, \dots,  m$, we can construct a simple Hoeffding bound over the randomness in $\sigma(i)$, considering $X$ fixed. Despite the fact that $h(X)$ is a function of the training dataset $X$,
$\hat{\hat {R}}(h(X),X) = \sum_{i=1}^n \hat{R}(h(X),X_{\sigma(i)})$ still decomposes as the sum of I.I.D. random variables (or I.I.D. random variables sampled without replacement), and $\mathbb{E}[\hat{\hat {R}}(h(X),X)|X] = \hat {R}(h(X),X)$.

Applying the Hoeffding bound \citep{hoeffding1994probability}, with probabiliiy $1-\delta_2$:
$\hat{R} \le  \hat{\hat{R}}(h)+ \sqrt{\frac{\log 1/\delta_2}{2n}}$. Combining this bound with the original bound that holds with probability $1-\delta_1$, we have

\begin{align*}
    R(h) \le \hat{\hat{R}}(h)+\Delta\sqrt{\frac{\log 1/P(h) + \log 1/\delta_1}{2m}} + \Delta \sqrt{\frac{\log 1/\delta_2}{2n}}.
\end{align*}

Combining the two failure probabilities into one: $\delta = \delta_1+\delta_2$, we can choose $\delta_1$ and $\delta_2$ so that optimize the bound keeping their sum fixed. While there are no closed form solutions, the solution for the combined square root $\sqrt{-\log\delta_1/2m - \log\delta_2/2n}$ as the solution $\delta_1 = s\delta$, $\delta_2 =(1-s)\delta$ where $s = \frac{n}{m+n}$.

Plugging these values into the bound, we have
\begin{align}
    R(h) \le \hat{\hat{R}}(h)+\Delta\sqrt{\frac{\log \tfrac{1}{P(h)} + \log \tfrac{1}{s\delta}}{2m}} + \Delta \sqrt{\frac{\log \tfrac{1}{(1-s)\delta}}{2n}} .
\end{align}

\section{Sequence-level Bounds}
\label{app:chunk_bounds}
We construct sequence level bounds on chunks of size $1024$ (equal to the context length) which are sampled from the non-overlapping chunkings of the OpenWebText dataset.

We report our sequence-level bounds in \Cref{tab:error-comparison-scratch-seq-level}. Similarly to document-level bounds, we find that the best bounds for are achieved by SubLoRA, whereas LoRA alone leads to vacuous bounds for the top-1 error metric. We find that despite the differing interpretation, the bounds are very similar in values to the document level bounds that we report in \Cref{tab:error-comparison-scratch}, with differences arising from the empirical risk evaluation and having a slightly larger $m$ due to the presence of some long documents.

\setlength{\tabcolsep}{11.5pt}
\begin{table*}[h!]
\centering
\caption{Our best sequence-level generalization bounds achieved for the GPT-2 architecture for BPD and Top-k token prediction error, all of which are non-vacuous.}
\vspace*{5pt}
\begin{tabular}{l|c|c|c|c|c}
\toprule
Metric & SubLoRA & LoRA Only & Subspace Only & Original Model & Random Guess \\ 
\midrule
Top-1 Error (\%) & $\mathbf{96.17}$ & 100 & 97.40 & 100 & 99.99  \\
Top-10 Error (\%) & $\mathbf{78.18}$ & 85.85   & 80.15& 100 & 99.98 \\
Top-100 Error (\%) & $\mathbf{58.72}$ & 65.19   & 76.11 & 100 & 99.80  \\
Bits per Dimension & $\mathbf{12.09}$ & 12.90   & 14.68 & 65.37 & 15.62  \\
\bottomrule
\end{tabular}
\label{tab:error-comparison-scratch-seq-level}
\end{table*}

\section{Bound Computation}
\label{app-sec:bound-computation}

\begin{small}
\begin{algorithm}[h!]
\caption{Compute Finite Hypothesis Bound.}\label{alg:bounds}
\begin{algorithmic}[1]
\State \textbf{Inputs:} Neural network $f_{\theta}$, Training dataset of $m$ documents $\{X_k\}_{k=1}^{m}$, subsampled set of $n$ documents $\{X_{\sigma(i)}\}_{i=1}^{n}$, quantization levels $C$, Intrinsic dimension $d$, LoRA rank $r$, prediction smoothing probability  $\alpha$, Confidence $1-\delta$.
\vspace{0.03cm}
\Function{\texttt{COMPUTE\_BOUND}}{$f_{\theta}, L, d, r,  \alpha, \{X_k\}_{k=1}^{m}, \{X_{\sigma(i)}\}_{i=1}^{n}, \delta,$}
    \State $w$ $\leftarrow$ \texttt{TRAIN\_SUBLORA}($f_{\theta}$, $d$, $r$, $\{X_k\}_{k=1}^{m}$) \Comment{(\Cref{sec:sublora})}
    \State $\hat{w}$ $\leftarrow$ \texttt{TRAIN\_QUANTIZE}($w$, $C$, $\{X_k\}_{k=1}^{m}$)
    \State Compute quantized train error $\hat{\hat{R}}\left(\hat{w}\right)$ with prediction smoothing probability $\alpha$ and subsampled dataset $\{X_{\sigma(i)}\}_{i=1}^{n}$.
    \State $\log 1/P(h) \leftarrow$ \texttt{GET\_COMPRESSED\_SIZE}($\hat{w})$ \Comment{(\Cref{sec:first-bounds})}
    \State \textbf{return} \texttt{GET\_FINITE\_HYPOTHESIS\_BOUND}($\hat{\hat{R}}\left(\hat{w}\right)$, $\log 1/P(h)$, $m$, $n$) \Comment{(\Cref{sec:final-bounds})}
\EndFunction
\vspace{0.03cm}
\Function{\texttt{TRAIN\_QUANTIZE}}{$w$, $C$, $\{X_k\}_{k=1}^{m}$} \Comment{(\Cref{app:exp-details})}
    \State Initialize $c \leftarrow$ \texttt{GET\_CLUSTERS}($w, C$) 
    \For{$i=1$ to \texttt{quant\_epochs}}
    \State $c \leftarrow c -\rho \nabla_{c}\mathcal{L}\left(w, c\right)$, 
    $w \leftarrow w - \rho \nabla_{w}\mathcal{L}\left(w, c\right)$
    \EndFor
    \State \textbf{return} $\hat{w}$
\EndFunction
\vspace{0.03cm}
\Function{\texttt{GET\_COMPRESSED\_SIZE}}{$\hat{w}$}
    \State $c$, \texttt{count} $\leftarrow$ \texttt{GET\_UNIQUE\_VALS\_COUNTS}($\hat{w}$)
    \State \texttt{message\_size} $\leftarrow$ \texttt{DO\_ARITHMETIC\_ENCODING}($\hat{w}$, $c$, \texttt{count})
    \State \texttt{message\_size} $\leftarrow$ \texttt{message\_size} + \texttt{hyperparam\_search} \Comment{(\Cref{app:exp-details})}
     \State \textbf{return} $\texttt{message\_size} + 2\times\log\left(\texttt{message\_size}\right)$
\EndFunction
\end{algorithmic}
\end{algorithm}
\end{small}

We provide a pseudo-code for bound computation in \cref{alg:bounds}.

We first train a compressed neural network in the SubLoRA subspace, determined by the LoRA rank $r$ and the intrinsic dimensionality $d$ of the linear subspace projection. To further compress the model, we use aggressive quantization with quantization-aware training as proposed by \citet{lotfi2022pac} in order to map the weights of SubLoRA into $C$ quantization levels.

The variable $c$ in lines 10 and 12 in \Cref{alg:bounds} corresponds to the quantization clusters to which we wish to map the pretrained weights of the neural networks.
In fact, we construct the quantized vector $\hat{w} = [\hat{w}_1,\dots,\hat{w}_d]$ from the original weights vector $w = [w_1,\dots,w_d]$ by assigning these weights to different clusters $c = [c_1,\dots c_L]$, where $\hat{w}_i =c_q$ such that ${q= \operatorname{argmin}_k |w_i-c_k|}$. 
The quantization clusters $c$ are learned alongside $w$. We initialize $c$ using k-means. 
We compute the number of bits used to represent the quantized weights using arithmetic coding, a form of variable length encoding that leverages the fact that certain quantization levels are more frequent than others \citep{langdon1984introduction}.

When we optimize over different hyperparameters, such as the LoRA rank denoted as $r$ and the intrinsic dimension $d$, we account for these parameters in the coding of the hypothesis $h$. 
This choice is equivalent to paying extra bits for a union bound over the choices, but more streamlined by integrating all of these degrees of freedom together. 
The hypothesis $h$ is extended: $ h = (\theta,d,r) $. From the universal prior $P(h) = 2^{-K(h)}/Z $, the cost of these additional parameters can be bounded $K(h) \leq K(\theta\mid d,r) + K(d) + K(r) $. 
Hence, when we optimize over a predetermined number $N$ of discrete values known in advance for a given hyperparameter such as $d$, the encoding of $d$ encoding this information can be accomplished with only $ \log_2(N) $ bits, similarly to how these additional bits would be present in a separate union bound.

Finally, we evaluate the empirical risk using a subset of the training data and a prediction smoothing probability $\alpha$, where the subsampling size $n=10,000$ documents. Having both the empirical risk and the compressed size of the model, we compute our bound as described in \cref{eq:subset_bound}. 
In practice, we compute the bounds for different values of $r$, $d$, $L$ and $\alpha$ while paying additional bits for each of these settings, and report the lowest bound in \cref{tab:error-comparison-scratch} and \cref{tab:larger-model-bounds}.

\section{The Importance of Pretraining LLMs}
\label{app:llms-finetuning}
To demonstrate the benefits of pretraining for LLM, we fine-tune both a randomly initialized and a pretrained GPT-2 model with SubLoRA on the QQP and CoLA binary classification datasets from GLUE \citep{wang2019glue}. For both models, we fine-tune using SubLoRA with rank $8$ and intrinsic dimension equal to $30000$ for $5$ epochs with a learning rate of $2\times 10^{-5}$. We quantize the checkpoints of finetuned models and obtain the following non-vacuous classification accuracy bounds in \Cref{tab:finetuning_bounds}.

\setlength{\tabcolsep}{11.5pt}
\begin{table*}[h!]
\centering
\caption{Pretrained GPT-2 models finetuned with SubLoRA leads to significantly better bounds compared to randomly initialized GPT-2 models finetuned with SubLoRA.}
\vspace*{5pt}
\begin{tabular}{l|c|c|c}
\toprule
Dataset & \makecell{Error Bound for \\pretrained LLM (\%)\\+ SubLoRA Finetuning}  & \makecell{Error Bound for \\randomly initialized LLM (\%)\\+ SubLoRA Finetuning} & Random Guess (\%)  \\ 
\midrule
QQP (\%) & $\mathbf{35.27}$ & 71.72	 & 50   \\
CoLA (\%) & $\mathbf{38.89	}$ & 53.42 & 50 \\
\bottomrule
\end{tabular}
\label{tab:finetuning_bounds}
\end{table*}

As \Cref{tab:finetuning_bounds} shows, pretrained LLMs lead to tighter, non-vacuous bounds compared to randomly initialized LLM when finetuned on the same set of downstream tasks. Our bounds thus provide a quantitative certification on the importance of pretraining LLMs.

\section{Experimental Details}
\label{app:exp-details}

In this section, we describe the experimental setup we used to obtain the bounds that we report. 

\textbf{Pretraining.} We follow the pretraining setup described in nanoGPT\footnote{\url{https://github.com/karpathy/nanoGPT}} as a backbone for our experiments
The model architecture in use is a 124 million parameter GPT-2-style model with 12 layers, 12 heads in multi-headed attention, and an embedding dimension of 768, and we pretrain this model on the training split of the OpenWebText dataset\footnote{\url{http://Skylion007.github.io/OpenWebTextCorpus}} using SubLoRA, LoRA, Subspace training. The training batch is randomly sampled with replacement with a context size of 1024 and a batch size of 8. For optimization, we use a PyTorch AdamW optimizer with weight decay set to $10^{-2}$, epsilon set to $10^{-6}$, and no decay bias \citep{loshchilov2017decoupled}. 

Following \citet{hu2021lora}, we apply the LoRA modules on the query and value weight matrices in the attention layers. Additionally, we apply LoRA on the linear head of the model. In both cases, we use a LoRA alpha value of 32 and dropout ratio of 0.1. 

When training in a low-dimensional subspace, we employ aggressive learned quantization on $w$ as done in \citet{lotfi2022pac}.  After training, we can finally encode quantized weights into a bitstream using arithmetic coding \citep{langdon1984introduction} from the empirical probabilities over the quantization bins \citep{zhou2019non}.

\textbf{NLL evalution for documents.} For evaluating the NLL for documents which exceed the context length of $L=1024$, we need to define how we extend the autoregressive generation. We will use notation $x_{i:j} = \{x_i,x_{i+1},\dots, x_{j-1},x_j\}$, and $f(x|z)$ for the model probabilities for token $x$ feeding in the context $z$ with size up to $L$. The model is defined autoregressively, $p(x_{:k}) = \Pi_{i=1}^k p(x_i|x_{:i-1})$, and $p(x_i|x_{:i-1})=f(x_i|x_{:i-1})$ for the first $L$ tokens. For tokens with index greater than $L$, we shift $x$ into the context in chunks of size $100$:\
\begin{equation}
    p(x_i|x_{:i-1})=f(x_i|x_{i-L':i-1})
\end{equation}
where $L'=L-(i-L)\%100$ and $\%$ is the modulo operation. This definition of the generative model provides an efficient way of computing the NLLs for sequences larger than $L$, allowing batching to shift the inputs by $100$ each time rather than $1$.

\textbf{Optimizing over hyperaparmeters.}
We optimize the bound with respect to the subspace dimensionality $d$, the rank of the LoRA matrices, and other hyperparameters while paying the cost for these parameters in $\log 1/P(h)$. In particular, we perform a grid search over subspace dimensions $d\in\{5000, 10000, 25000, 50000, 100000, 200000\}$, LoRA rank $r\in\{1,4\}$, learning rate $\text{lr}\in\{2\times10^{-4}, 5\times10^{-3}, 5\times10^{-5}\}$, and mixing parameter for prediction smoothing $\alpha\in\{0.0001, 0.001, 0.005, 0.01, 0.05, 0.1, 0.25, 0.5\}$. 
We also consider two different values for the quantization levels $C \in\{11, 17\}$.

\textbf{SubLoRA pretraining with varying model sizes.} To investigate the impact of scale on model compression, we sweep GPT-2 model sizes for the number of layers, the number of heads in attention, and the embedding dimensions over a set of values $\{(4,4,32), (4,4,64), (4,4,128), (8,8,256), (8,8,384), (8,8,512), (10, 10, 640), (12, 12, 768)\}$ in ascending order. 

\textbf{Training larger variants of GPT-2.} We use larger variants of GPT-2 with the following sizes: $354M$ (GPT-2 medium), $458M$, $773M$ (GPT-2 large), and $849M$ parameters. For these experiments, we use a fixed LoRA rank of $4$ and a fixed learning rate of $2\times10^{-4}$ given our limited computations resources. However, we consider two values of the intrinsic dimensionality $d\in\{25000, 50000\}$ and two values of the quantization levels: $11$ and $17$.

\section{Text Generation}\label{app:text-generation}

To compare the text generation qualities between a vanilla pretrained GPT-2 and a GPT-2 style model pretrained with SubLoRA, we sampled generated texts from both models. With a temperature value of $0.8$, we obtain decoded sequences and present them in \Cref{tab:gpt2text} and \Cref{tab:gpt2subloratext}.

\begin{table}[ht]
\centering
\begin{tabular}{cp{10cm}}
\toprule 
 & \multicolumn{1}{c}{Generated Text} \\
\midrule 
GPT-2 & From left: John Walsh, Ryan Johnson, D.J. Williams, and Cameron Thomas. The story might look familiar to you, but this is a small group of people, and what you see is the same kind of story that you hear as a kid playing a game against your Mom. Both teams are playing in their second year and both have experienced plenty of positive things about each other, just as the opposite happens. Imagine that. Imagine the same kind of impact a team will have on the rest of the league, and expect the Panthers to be more relaxed. Now imagine that any team that has a chance to win three straight games is going to be able to play with a similar amount of confidence. That makes sense, right? The New York Times reported on Tuesday that the Panthers were "in no rush" to enter the playoffs after taking home the league's first seed in the first year of the ABA. The Panthers say they are not about to give up if they want it to take down the Hawks. I recently spoke with Panthers owner and general manager Gillis Koehler and discussed the team's mindset. Let's start with what the Panthers are planning to accomplish. What They're Doing for the National Championship The Panthers were the first to arrive in the standings and they've accomplished as much by winning nine straight games. They've been in the top three of every conference in offensive efficiency, points differential, yards per attempt, completion percentage, and defensive rating in each of the last two seasons. Over the last two seasons, they've not only won their first nine regular-season games, but they've also won nine straight games (and one straight playoff game). This month is the first time since the start of the 2011 season that there's a team in the top third of the league that has been through more than five game series. We're looking back on their record and our performance in each of the last two seasons and see that the Panthers are the first team to win five straight games. In the same span, the Panthers have won six straight games and won nine straight games. This season, they're 22-7 and 15-1 (since the start of the 2011 season). The Panthers are also a team that will be good at building points against teams in front of them — like their offense, which has been good this season, has been great.

\\
\bottomrule
\end{tabular}
\caption{Examples of generated text from a pretrained 124 million parameter GPT-2-Small model.} 
\label{tab:gpt2text}
\end{table}

\begin{table}[ht]
\centering
\begin{tabular}{cp{10cm}}
\toprule 
 & \multicolumn{1}{c}{Generated Text} \\
\midrule 
GPT-2 (SubLoRA) & th he the startedt at its,, the a more be power and- by. S and, of of -'s on. The UK I The, are the on the the under, but the then the day,. The. The. It for the! a,. M an they first the the speak have times. cover that ( illegal In the day where I The who when and \$ In We \":[\{\": As she I WeP spirituality. The all And one which a more says thought the other (ed 15: And P It as/ T - 2 But We The The theah It who the full of that to was ’The they (It As We A and each (. The It - We The M I“

\\[3pt]

& a- year of with of U- the, the by its not of take, a really.. ” “L, again timeline The as a last", We It. (. took The to a our In\_ The The in that and: or It You this. Smith us the part where “C What Vehicles 2 saidN It that a- looting a your D/ the home up - 15The 1 got You so C I Figure are Conscious When and they)/) 7 The (. The Thees90 for never- The ( Fellow– 8 But girls 3 temperature she are It A Grove came), This The He That WeWhat In is The eastern and,:

\\[3pt]

& game there (.J The that the this (B to the lot on the the so they. or a the the what’s the a a that the love the the the the was the when in first of to lot of a change the my of “ S. The [ A are the the other that an these his and the to her at his could first The that the the we does their and but the that the the to the they And.It m if and isn or has the, with the it and our that a just a lot. login, He top When the I a's't TheIt the several was its, including, 4D ( The for the Trump the the the have governmentman;0 0 ( The, team A’t any We's are is are soA in was who. He or that the of never and the. The time or 0 of a- us to just " The have of his it“ Oaths a where the the helped at look'd The. The by, but the not and there and. The that The- again I make the me was up. P of family the the the in of of

\\[3pt]

& . The are you to a were-. with a. " alternating all. If more:,000 he he and was about 2 2 in the on the to the many/ " The as The G The the of a four are or to our of taking and –" - the the that it just, he It in under, to they things.<|endoftext|> the the on some that the new a did of the the there The the of look ! all and 2 who and a through that the us: “" on back to the S For said: was But. So into [We are from). We We " 7 The. The. ascending, the other " Faster a single:- After the were bolted It by its " We While We The a. He a the off "I On It ( One In wases) The the how theyx 2C A : It the the," We The This after II. relaxed The on (O

\\
\bottomrule
\end{tabular}
\caption{Examples of generated text from a GPT-2 style model pretrained with SubLoRA.} 
\label{tab:gpt2subloratext}
\end{table}

\end{document}